\documentclass{IEEEcsmag}

\usepackage[colorlinks,urlcolor=blue,linkcolor=blue,citecolor=blue]{hyperref}
\expandafter\def\expandafter\UrlBreaks\expandafter{\UrlBreaks\do\/\do\*\do\-\do\~\do\'\do\"\do\-}
\usepackage{upmath,color}
\usepackage{amsmath,amsthm,mathrsfs}
\usepackage{graphicx}
\usepackage{amsfonts}
\usepackage{cleveref}
\usepackage{tikz}
\usetikzlibrary{arrows.meta,calc}
 \usepackage{booktabs} 

\jvol{XX}
\jnum{XX}
\paper{8}
\jmonth{Month}
\jname{IEEE Computer Society Magazine}
\jtitle{Computing in Science \& Engineering}
\pubyear{2026}

\newtheorem{theorem}{Theorem}
\newtheorem{lemma}{Lemma}
\newtheorem{definition}{Definition}
\newtheorem{proposition}{Proposition}
\newtheorem{remark}{Remark}
\newtheorem{corollary}{Corollary}

\newcommand{\LRs}[1]{\left[ #1 \right]}
\newcommand{\LRp}[1]{\left( #1 \right)}
\newcommand{\LRc}[1]{\left\{ #1 \right\}}

\newcommand{\nor}[1]{\left\| #1 \right\|}
\newcommand{\snor}[1]{\left| #1 \right|}

\newcommand{\mc}[1]{\mathcal{#1}}

\usepackage[most]{tcolorbox}
\newtcolorbox{keypoints}[1][]{
 colback=blue!5!white,
 colframe=blue!75!black,
 fonttitle=\bfseries,
 title=#1,
 sharp corners,
 boxrule=0.5pt,
 left=3pt,
 right=3pt,
 top=3pt,
 bottom=3pt
}

\newtcolorbox{warningbox}[1][]{
 colback=red!5!white,
 colframe=red!75!black,
 fonttitle=\bfseries,
 title=#1,
 sharp corners,
 boxrule=0.5pt,
 left=3pt,
 right=3pt,
 top=3pt,
 bottom=3pt
}

\newtcolorbox{tipbox}[1][]{
 colback=green!5!white,
 colframe=green!75!black,
 fonttitle=\bfseries,
 title=#1,
 sharp corners,
 boxrule=0.5pt,
 left=3pt,
 right=3pt,
 top=3pt,
 bottom=3pt
}

\begin{document}

\sptitle{Controversies in AI for Science and Engineering}

\title{The AI Research Assistant: Promise, Peril, and a Proof of Concept}

\author{Tan Bui-Thanh}
\affil{ Department of Aerospace Engineering \& Engineering Mechanics, and Oden Institute for Computational Engineering \& Sciences, University of Texas at Austin, Austin, TX, 78712, USA}

\markboth{AI IN SCIENTIFIC COMPUTING}{HUMAN-AI COLLABORATION IN MATHEMATICS}

\begin{abstract}
Can artificial intelligence truly contribute to creative mathematical research, or does it merely automate routine calculations while introducing risks of error? We provide empirical evidence through a detailed case study: the discovery of novel error representations and bounds for Hermite quadrature rules via systematic human-AI collaboration.

Working with multiple AI assistants, we extended results beyond what manual work achieved, formulating and proving several theorems with AI assistance. The collaboration revealed both remarkable capabilities and critical limitations. AI excelled at algebraic manipulation, systematic proof exploration, literature synthesis, and LaTeX preparation. However, every step required rigorous human verification, mathematical intuition for problem formulation, and strategic direction.

We document the complete research workflow with unusual transparency, revealing patterns in successful human-AI mathematical collaboration and identifying failure modes researchers must anticipate. Our experience suggests that, when used with appropriate skepticism and verification protocols, AI tools can meaningfully accelerate mathematical discovery while demanding careful human oversight and deep domain expertise.

\end{abstract}

\maketitle

\chapteri{T}he integration of artificial intelligence into scientific research has triggered a fundamental debate about the nature of scholarly work. Nowhere is this controversy more acute than in mathematics, a field that prizes rigor, creativity, and human insight. Can large language models (LLMs) like \texttt{GPT-4} and \texttt{Claude} meaningfully contribute to mathematical discovery, or do they merely produce plausible-sounding but potentially flawed outputs that mislead researchers?

This paper addresses these questions not through philosophical argument but through documented experience. The closest pure-math counterpart to our work is \cite{DaviesNature2021}, though the latter does not provide the same level of detail, the patterns of success and failure, etc.  Over the past two months, we conducted a systematic exploration of human-AI collaboration in mathematical research that produced novel theoretical results for Hermite quadrature error estimation. Rather than hide the AI's role, as might be tempting given current academic norms, we provide unusual transparency about the entire research process, including both successes and failures.

While systematic evaluations on benchmarks \cite{DaviesNature2021, AlphaProof2025,HendrycksMAT2021,FreitagNeurIPS2023} assess AI capabilities in controlled settings, our detailed case study documents the messy reality of using AI for actual research with all its iterative refinements, dead ends, and verification challenges.
The broader AI math reasoning literature provides the context, while our case study provides the lived experience. Both are necessary for understanding the full picture.

\section{THE CONTROVERSY: PROMISE AND PERIL}

The use of AI in mathematics research has generated polarized responses. Optimists point to AI's potential to accelerate discovery. Recent work demonstrates diverse AI assistance approaches: machine learning systems identified patterns in knot theory that guided mathematicians to prove new theorems connecting algebraic and geometric invariants \cite{DaviesNature2021}, and formal theorem proving systems using reinforcement learning achieved silver-medal performance at the 2024 International Mathematical Olympiad \cite{AlphaProof2025}.


Skeptics raise serious concerns. LLMs are known to ``hallucinate'' plausible but incorrect mathematical statements. They lack genuine understanding and cannot reliably verify their own outputs. Critics  worry that over-reliance on AI may atrophy human mathematical intuition, introduce subtle errors that escape detection, or simply replace deep thinking with superficial pattern matching. For example,
the authors of MATH benchmark \cite{HendrycksMAT2021}, consisting of 12,500 competition mathematics problems, found that even the largest GPT-3 models achieved only 3--7\% accuracy, and they concluded that "accuracy remains relatively low, even with enormous Transformer models." Critically, models exhibited severe overconfidence: GPT-2 assigned near-100\% confidence to both correct and incorrect answers, achieving only 68.8\% AUROC (Area Under the Receiver Operating Characteristic curve) in distinguishing them, which is barely above random chance at 50\%.  

A more relevant example for this paper is the comprehensive evaluation \cite{FreitagNeurIPS2023} of ChatGPT and GPT-4 on 709 graduate-level mathematics problems. They found average performance of only 3.20/5.0 for ChatGPT and 4.15/5.0 for GPT-4, concluding that LLM mathematical capabilities are "well below the level of a graduate student." Performance degraded particularly on proof-based questions and complex calculations. The authors also stated that LLMs "almost never expressed any form of uncertainty, even if output has been completely wrong", which implies that LLMs present incorrect results with identical confidence to correct ones.

Both perspectives contain truth. The question is not whether to use AI, as these tools are already reshaping research practices, but how to use them effectively, productively, and responsibly.

\section{OUR APPROACH: STRUCTURED HUMAN-AI COLLABORATION}

We propose a framework for productive human-AI mathematical collaboration based on clear division of responsibilities:

\textbf{Human Responsibilities:}
\begin{itemize}
\item Problem formulation and research direction
\item Mathematical intuition and strategic decisions 
\item Verification of all AI-generated outputs
\item Assessment of proof validity and novelty
\item Final judgment on mathematical correctness
\end{itemize}

\textbf{AI Responsibilities (under human supervision):}
\begin{itemize}
\item Algebraic and symbolic manipulation
\item Systematic exploration of proof strategies
\item Literature search and synthesis
\item LaTeX formatting and document preparation\footnote{In particular, I asked \texttt{Claude} to convert my first draft of the paper to the CiSE magazine format and reorganize the draft content (which was originally written for a math magazine) to fit the special issue context.  I further instructed it to rewrite the abstract and introduction to reflect the title and the context, pointed out the potential audiences of the article, and told it about my role and AI role. All conceptual contents, arguments, and interpretations are my own. Every AI-drafted sentence was reviewed, and it is either revised or removed if it does not match my tone/idea/intuition. The narrative, case study selection, and critical assessments are entirely my own. This experience reinforced a key lesson: AI can accelerate the mechanical aspects of writing, but the intellectual content must remain under human authorship and verification.}
\item Generation of numerical examples
\end{itemize}

This division mirrors the relationship between a senior researcher and a highly capable but unreliable research assistant: the AI does extensive algebraic and symbolic manipulations, but the human maintains ultimate responsibility and authority.

\section{CASE STUDY: HERMITE QUADRATURE ERROR}

To demonstrate this framework in action, we present our work on Hermite interpolation-based quadrature rules. The technical problem is substantial: deriving exact error representation and improved error bounds for numerical integration when both function values and derivatives are available.

\subsection{Initial Problem Formulation (Human)}

The research began when I came across the article ``An elementary proof of error estimates for the trapezoidal rule" by Cruz-Uribe and Neugebauer \cite{cruz2003elementary} while preparing Chapter 23 of my \href{https://users.oden.utexas.edu/~tanbui/books/AdjointBook.pdf}{adjoint book} \cite{BuiThanhAdjointBook}. 
Their key innovation was using reverse integration by parts (RIBP) to derive error bounds for the trapezoidal rule requiring only boundedness of $f'$ rather than $f''$. This elementary approach, accessible to calculus students, prompted me to ponder: could multiple applications of RIBP yield similar elementary derivations for Hermite quadrature, which uses both function and derivative values at endpoints? 
I jotted down a detailed idea for how to extend this work to Hermite quadrature, but did not work out the mathematical details. 

The idea was then assigned as an undergraduate summer-intern project. Over approximately ten weeks of manual work, the students successfully derived exact error representations and explicit error bounds for the cases $n=2$ and $n=3$, requiring only boundedness of $f^{(2)}$ and $f^{(3)}$ respectively (rather than the classical requirement of $f^{(4)}$ and $f^{(6)}$) \cite{VillatoroKrishnanunniBuiThanh}. However, they encountered prohibitive algebraic complexity when attempting to extend to general $n$, noting in their draft: ``This ended up being very messy to show for a general case.'' In what follows, ``my students'' refers to these two researchers.

The central question I assigned to my students was: could we devise an elementary derivation of the exact error representation of Hermite quadrature in the spirit of \cite{cruz2003elementary}? A byproduct would be then replacing the restrictive requirement on the boundedness of the $2n$-th derivative with an $n$-th derivative of the integrand, making the results applicable to a broader class of functions.

This question required mathematical intuition and knowledge of the field. No AI system formulated this research direction. Instead, it emerged from human understanding of the problem domain and recognition of a gap in existing theory.

\subsection{Literature Review (Human and AI Collaboration)}
For this paper, we had already completed a literature review of work related to \cite{cruz2003elementary} and its extensions to Simpson's rule \cite{ELSINGERJASONR.2007AEPO,HaiD.D.2008AEPo}, so we did not ask AI to assist with the literature search---though, in hindsight, it could have been useful and would have saved a significant amount of time. In general, AI (e.g. LLMs) tools can help find papers of similar type, summarize their main contributions, and even draft a state-of-the-art overview with BibTeX citations. 
Specifically, AI can synthesize patterns across papers and identify conceptual connections that keyword search might miss. However, they sometimes hallucinate citations or misattribute results. Indeed, in a separate project, I found that some AI-suggested references did not actually exist, and others did not support the contributions attributed to them. AI can therefore be useful for generating a candidate bibliography and an initial synthesis, but every reference and every claimed contribution must be verified through traditional search engines and human review. When I do use AI for literature review, I cross-check results across multiple AI systems and then follow up with an independent search (e.g., \texttt{Google Scholar}) to catch omissions and confirm details.

\textbf{AI Success:} Comprehensive literature coverage, identification of research gap.

\textbf{AI Limitation:} Required human verification of citation accuracy and relevance assessment.

\subsection{Proof Strategy Development (Iterative Human-AI)}

The Newton--Cotes formulas, using function values at endpoints, are among the most popular numerical integration methods.
We now consider the setting in which one has both function values and derivatives at endpoints. The natural extension of Newton--Cotes is Hermite interpolation \cite{BirkhoffDeBoor1964,Schumaker1973,Lourakis2007,GasperRahman2004,Davis1975,StoerBulirsch1993} for $f\LRp{x}$. 
In particular, we consider the following quadrature rule:
\begin{equation}
 \int_a^b f(x)\, dx=\int_a^b H_n(f;x)\, dx+E_n, 
 \label{eq:hermiteQuadrature}
\end{equation}
where $H_n(f;x)$ is the two-point Hermite interpolating polynomial constructed using the following information on the function and the values of its first $n-1$ derivatives: 
$f(a),\ f'(a),\dots, f^{(n-1)}(a)$ and $f(b),\ f'(b),\dots ,f^{(n-1)}(b)$, and $E_n$ is the corresponding integration error by replacing $f(x)$ with its Hermite interpolation $H_n\LRp{f;x}$.

An expression for the error in \cref{eq:hermiteQuadrature} is available in \cite{Davis1975, atkinson}. 
Lampret et al.\ \cite{LampretVito2004AItH} considered a quadrature rule referred to as the ``composite Hermite rule,'' which applies when both function values $f(a),\ f(a+\frac{b-a}{k}),\ f(a+2\frac{b-a}{k}),\dots ,f(b)$ and the endpoint derivatives $f'(a),\ f'(b)$ are available, where $k$ is the number of subintervals in $[a,\ b]$. 
Note that the case $k=1$ in Lampret et al.\ \cite{LampretVito2004AItH} corresponds to taking $n=2$ in \cref{eq:hermiteQuadrature}, and the resulting error bounds agree with those in \cite{Davis1975, atkinson}. 
However, these error bounds require boundedness of the $2n$-th derivative of the integrand.

The core mathematical insight that I had was the following: if reverse integration by parts (RIBP) could yield elementary proofs for trapezoidal rule in \cite{cruz2003elementary}, could \emph{multiple} RIBP yield an elementary derivation of Hermite quadrature with exact error representations and milder conditions, by setting free parameters from RIBP properly?

This hypothesis was human-generated, but AI systems played a crucial role in exploring its consequences. I didn't carry out the math,
and thus did not know whether the systems of equations for free parameters are well-posed and solvable in closed-form. 
My intuition suggested that the answer was yes
when I assigned this task to my students.
In their \href{https://users.oden.utexas.edu/~tanbui/books/HermiteQuadrature.pdf}{original draft} \cite{VillatoroKrishnanunniBuiThanh}, my students had:
\begin{itemize}
\item Derived exact error formulas for $n=\LRc{2,3}$ using the coefficient matching approach,
\item Found explicit values of free parameters for cases $n=\LRc{2,3}$,
\item Attempted but failed to find closed-form solutions for free parameters for general $n$ due to algebraic complexity,\footnote{In their original remark: "This ended up being very messy to show for a general case.
So we left it out for now".}
\item Proposed numerical algorithms as an alternative, though it was not clear if the systems were solvable numerically.
\end{itemize}
 
My responsibility was then to review this draft around mid-December 2025. Since this was a busy time, I wanted to see whether AI could help me speed up the task. To my surprise and pleasure, under my direction, AI helped more than I expected. 

\begin{definition}[Two-Point Hermite Interpolating Polynomial]
\label{defi:her_def}
 Let $a$ and $b$ be distinct. The Hermite interpolating polynomial is defined by:
 \begin{equation}
 \begin{aligned}
 &H_n(f;x) = (x-a)^{n}\sum_{k=0}^{n-1}\frac{B_k(x-b)^k}{k!} + (x-b)^{n}\sum_{k=0}^{n-1}\frac{A_k(x-a)^k}{k!},\\
 &\mathrm{with}\quad A_k = \frac{d^k}{dx^k}\LRs{\frac{f(x)}{(x-b)^{n}}}_{x=a}, \quad B_k = \frac{d^k}{dx^k}\LRs{\frac{f(x)}{(x-a)^{n}}}_{x=b}.
 \end{aligned}
 \label{her_def}
 \end{equation}
\end{definition}
An immediate consequence is
\begin{align*}
 H_n(f;a) = f(a),\ H_n'(f;a) = f'(a), \dots, H_n^{(n-1)}(f;a) = f^{(n-1)}(a), \\
 H_n(f;b) = f(b),\ H_n'(f;b) = f'(b), \dots, H_n^{(n-1)}(f;b) = f^{(n-1)}(b).
\end{align*}
For testing purposes, I gave two AI platforms\footnote{Throughout this paper, I used two AI systems: Claude (Anthropic) and ChatGPT (OpenAI). I refer to them generically as ``AI platforms" as the specific system is not relevant to the discussion.} \Cref{defi:her_def} and asked them to verify the preceding derivative matching result. Interestingly, one of them immediately gave the correct proof and the other didn't. I pointed out that the proof was not correct, and the latter then provided a slightly different, but correct, proof.

A few remarks are in order. First, the AI models' proofs, though slightly different, used the same notation. This may imply that they were trained on similar data. Second, it is important that we verify and assess the validity of an AI proof (or answer), as they do make mistakes (all the time). Finally, to my pleasure, one of the proofs is slightly different from mine, that is, AI could provide/explore different proof ideas that are much faster than human.

Next, from the well-known Hermite interpolation error \cite{Davis1975, atkinson}:
\begin{equation}
 f(x) - H_n(f;x) = \frac{f^{(2n)}(\varepsilon)}{(2n)!}\LRp{(x-a)(x-b)}^n,
 \label{her_error}
\end{equation}
where $\varepsilon \in [a,b]$, we have an exact error representation for Hermite quadrature:
\begin{multline}
\int_a^b f(x)\, dx=\int_a^b H_n(f;x)\, dx \\+\int_a^b\frac{f^{(2n)}(\varepsilon)}{(2n)!}\LRp{(x-a)(x-b)}^n\, dx, 
\label{eq:int_h}
\end{multline}
which, however, requires $f(x)$ to have a $2n$-th derivative (almost everywhere) on $[a,\ b]$. 
Consequently, for certain functions such as $\snor{x}^{3/2}$ on the interval $[-1,\ 1]$, the representation for $E_n$ in \cref{eq:int_h} is not valid. Another example is 
\[ 
f(x) := \int_0^x \log\LRp{t-1/2}\,dt
\]
on the interval $[0,\ 1]$, for which the classical error \cref{her_error} does not exist for $n=1$, though the Hermite interpolation (simple linear interpolation for this case), and thus its error, is well-defined.

\textbf{The reverse integration by parts approach:} The classical error representation \cref{eq:int_h} requires $f^{(2n)}$. Our goal is to derive error representations requiring only $f^{(n)}$. The key insight comes from  \cite{cruz2003elementary} for the trapezoidal rule: instead of using polynomial approximation theory, they applied integration by parts ``backwards'' (starting from the integral and working toward boundary terms) to express the error in terms of $f'$ rather than $f''$. We extend this idea by applying RIBP multiple times---$n$ iterations yield boundary terms involving $f, f', \ldots, f^{(n-1)}$ at both endpoints, plus a remaining integral involving only $f^{(n)}$. The challenge is determining free parameters in the RIBP formula so the boundary terms exactly match the Hermite quadrature weights.

\subsection{Hermite Polynomial Integration (AI with Human Verification)}

We are going to derive an exact representation for $E_n$ that requires only the $n$-th derivative. To that end, let us first write $\int_a^b H_n(f;x)\, dx$ in the form
\begin{equation}
\int_a^b H_n(f;x)\, dx=\sum_{j=0}^{n-1}w_j^a f^{(j)}(a)+\sum_{j=0}^{n-1}w_j^b f^{(j)}(b), 
\label{eq:her_rewrite}
\end{equation}
where $f^{(j)}(a)$ denotes the $j$-th derivative of $f(x)$ evaluated at $x=a$, and $w_j^a,\ w_j^b$ are the weights to be determined.
\begin{proposition}
\label{propo:first_p}
 Consider the Hermite interpolating polynomial in \cref{her_def} for a given $n$. 
 Then the weights $w_j^a$ and $w_j^b$ in \cref{eq:her_rewrite} are given by:
\[ w_j^a=(b-a)^{j+1}n\sum_{k=j}^{n-1}\binom{k}{j}\frac{(n+k-j-1)!}{(n+k+1)!}, \quad
 w_j^b=(-1)^j w_j^a,
\]
where $\binom{k}{j}=\frac{k!}{j!(k-j)!}$. Consequently,
\[
\int_a^b H_n(f;x)\, dx=\sum_{j=0}^{n-1}w_j^a \LRs{f^{(j)}(a)+ (-1)^j f^{(j)}(b)}.
\]
\end{proposition}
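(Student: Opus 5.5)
The plan is to integrate \cref{her_def} term by term and rewrite each coefficient $A_k$ (resp.\ $B_k$) as a combination of $f^{(j)}(a)$ (resp.\ $f^{(j)}(b)$) using the Leibniz rule. Then I interchange the order of the double sum over $k$ and $j$ and read off $w_j^a$ and $w_j^b$.

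\textbf{Step 1 (moments).} Only the part $(x-b)^n\sum_k A_k (x-a)^k/k!$ involves derivatives at $a$. For it I need
\[
\int_a^b (x-a)^k (x-b)^n\,dx = (-1)^n (b-a)^{n+k+1}\frac{k!\,n!}{(n+k+1)!}.
\]
This is the Beta integral: substitute $x=a+(b-a)t$ and use $B(k+1,n+1)=k!\,n!/(n+k+1)!$.

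\textbf{Step 2 (Leibniz).} Since $\frac{d^m}{dx^m}(x-b)^{-n}=(-1)^m\frac{(n+m-1)!}{(n-1)!}(x-b)^{-n-m}$ and $(a-b)^{-n-m}=(-1)^{n+m}(b-a)^{-n-m}$, I get
\[
A_k=\sum_{j=0}^k\binom{k}{j}f^{(j)}(a)\,(-1)^n\frac{(n+k-j-1)!}{(n-1)!}(b-a)^{-n-k+j}.
\]

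\textbf{Step 3 (combine and swap sums).} I multiply $A_k/k!$ by the moment from Step~1. The signs $(-1)^n(-1)^n$ cancel, the powers of $b-a$ collapse to $(b-a)^{j+1}$, and $n!/(n-1)!=n$. This leaves the contribution $(b-a)^{j+1}\,n\binom{k}{j}\frac{(n+k-j-1)!}{(n+k+1)!}f^{(j)}(a)$. Summing over $0\le j\le k\le n-1$ and exchanging to $\sum_{j=0}^{n-1}\sum_{k=j}^{n-1}$ gives exactly the stated $w_j^a$.

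\textbf{Step 4 (the $b$ weights).} I would obtain $w_j^b$ by symmetry rather than recomputation. Apply the result to $g(x)=f(a+b-x)$. Then $g^{(j)}(a)=(-1)^jf^{(j)}(b)$, and by uniqueness of Hermite interpolation $H_n(g;x)=H_n(f;a+b-x)$, so both have the same integral. Matching the $a$-weights for $g$ with the $b$-weights for $f$ yields $w_j^b=(-1)^jw_j^a$. Alternatively, I can repeat Steps 1–3 with $(a-b)^{j+1}$ in place of $(b-a)^{j+1}$, which directly produces the factor $(-1)^{j+1}$ times a sign from the moment. The final formula then follows by collecting terms.

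The main obstacle is purely bookkeeping: keeping the signs $(-1)^n$, $(-1)^m$ and $(-1)^{n+m}$ consistent, and orienting $(a-b)$ versus $(b-a)$ correctly. I would check the result against the trapezoidal case $n=1$, where $w_0^a=(b-a)/2$, and against $n=2$, where $w_0^a=(b-a)/2$ and $w_1^a=(b-a)^2/12$.
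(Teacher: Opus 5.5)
Your proposal is correct and follows the paper's route: integrate \cref{her_def} term by term, evaluate the moments $\int_a^b (x-a)^k(x-b)^n\,dx$ with the Beta function, expand $A_k$ by the general Leibniz rule, and exchange the double sum to read off $w_j^a$; your sign and factorial bookkeeping and your $n=1,2$ checks are all right. The reflection argument $g(x)=f(a+b-x)$ for $w_j^b=(-1)^jw_j^a$ (or the direct recomputation you also sketch) is valid, but the final ``matching'' step relies on the functionals $f\mapsto f^{(j)}(a)$ and $f\mapsto f^{(j)}(b)$ being linearly independent, so state this; it holds because Hermite interpolation in $\mathcal{P}^{2n-1}$ accepts arbitrary endpoint data.
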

Since my students had already proved \cref{propo:first_p} using direct integration of the Hermite polynomial with Beta function substitutions and general Leibniz rule, I asked one of the AI platforms to verify their proof rather than derive it independently. Its first response was "I have carefully checked your proof, and it looks correct!" with a few remarks that all steps are correct. It then said: "Overall: The proof is rigorous and correct! Nice work." Such a response made me suspicious. I then asked it to follow the derivation and check the algebraic manipulation of each step to make sure that each step was actually correct.
The AI provided me with a detailed check, which sped up my review greatly. Out of curiosity, I asked these AI platforms to provide a proof of \Cref{propo:first_p} for me. They provided different, but overly complex and long proofs. It is thus important for us to know that AI may provide unnecessarily complex (and possibly wrong) proofs.

The next result I needed to verify was the general $n$-fold RIBP, which should yield boundary terms involving derivatives up to order $n-1$ and a remaining integral involving $f^{(n)}$. My students had conjectured a specific form for this general formula, but I needed to verify it.
I again asked one of the AI platforms to check it for me. It provided a long induction and concluded with reasons that \cite[eq. (20)]{VillatoroKrishnanunniBuiThanh} was not correct. In particular, it proactively tried to verify the formula for $n=3$ and found a problem, which was great. I then instructed it to find the corrected formula by integrating by parts $n$ times the following expression (which is the last term on the right hand side of \cite[eq. (20)]{VillatoroKrishnanunniBuiThanh} that I know for sure must appear after $n$-fold RIBP). The general expressions for the boundary terms are the ones I am looking for help from AI to bypass tedious algebraic calculations.
\begin{equation}
 \int_a^b(-1)^{n}f^{(n)}(x)\LRp{\frac{(x+c)^{(n)}}{n!} + \sum_{i=0}^{n-2}\delta_i\frac{x^i}{i!}}dx.
\label{eq:En}
\end{equation}
After a couple of minutes of calculating, the AI platform said: "Excellent! I've completed a thorough verification. Here's what I found: The fully correct formula is...". However, the result was not correct: it missed factorials in the boundary terms! As can be seen, AI could indeed do the algebraic computation for me, which internally could be correct, but the final result provided to me was not entirely correct. Unlike a human, it does not seem to have the capability to double-check the result! I asked another AI platform the same question, and I was pleased that it provided the correct formula:
\begin{multline}
\int_a^b f(x)\, dx 
= \\ \sum_{j=0}^{n-1} 
f^{(j)}(b)\underbrace{(-1)^j 
\left[
 \frac{(b+c)^{j+1}}{(j+1)!}
 + \sum_{i=0}^{j-1} \delta_{\,i+n-1-j}\frac{b^{i}}{i!}
\right]}_{=: \alpha_j^b} \\
\quad + \sum_{j=0}^{n-1} 
f^{(j)}(a)\underbrace{(-1)^{j+1}
\left[
 \frac{(a+c)^{j+1}}{(j+1)!}
 + \sum_{i=0}^{j-1} \delta_{\,i+n-1-j}\frac{a^{i}}{i!}
\right]}_{=: \alpha_j^a} \\
\quad + \int_a^b (-1)^n f^{(n)}(x)
\left(
 \frac{(x+c)^n}{n!}
 + \sum_{i=0}^{n-2} \delta_i \frac{x^i}{i!}
\right)\,dx,
\label{eq:iterate}
\end{multline}
where $\boldsymbol{\theta} = \LRc{c,\ \delta_0,\dots, \delta_{n-2}\in \mathbb{R}}$ are arbitrary.

\textbf{AI Success:} Identified the appropriate mathematical tools and carried out the lengthy algebraic derivations and verifications.

\textbf{Human Verification:} Provided guidance for productive verifications, checked each algebraic step, and found and corrected errors involving missing the factorials.

\subsection{Coefficient Matching (Human Insight, AI Execution)}

Now, if we can identify constants $\boldsymbol{\theta} := \LRc{c,\ \delta_0,\dots, \delta_{n-2}}$ in \cref{eq:iterate} such that the boundary terms on the right hand side match those on the right hand side of \cref{eq:her_rewrite}:
\begin{multline}
\sum_{j=0}^{n-1}w_j^a f^{(j)}(a)+\sum_{j=0}^{n-1}w_j^b f^{(j)}(b) \\ = \sum_{j=0}^{n-1}\alpha_j^a f^{(j)}(a)+\sum_{j=0}^{n-1}\alpha_j^b f^{(j)}(b), \quad \forall f \in C^n\LRs{a,b},
\label{eq:freeParams}
\end{multline} 
then together with \cref{eq:hermiteQuadrature}, the Hermite quadrature error $E_n$ is exactly \cref{eq:En}. It is important to point out that in this case $E_n$ requires the $n$th-order, instead of the $2n$th-order, derivative of $f$ to exist (almost everywhere). 
{\bf The key questions are: 1) does a solution $\boldsymbol{\theta}$  for \cref{eq:freeParams} exist?, and 2) can we find it in a closed-form expression?}

My original aim was more ambitious. In particular, we wanted to find closed form expressions for the constants from \cref{eq:freeParams}.
To that end, 
I turned to AI for help which should be much more efficient and successful than me in terms of complex algebraic manipulations/calculations. 
I started with the concrete cases $n=\LRc{2,3}$, for which my students had already determined the free parameters ($c=-(a+b)/2$, $\delta_0=-(b-a)^2/24$ for $n=2$, and corresponding values for $n=3$), and asked one of the AI platforms,
which already saw my students' solution, to find the corresponding coefficients. It followed a similar strategy by setting $\alpha_j^a = w_j^a$ and 
 $\alpha_j^b = w_j^b$ for all $j=0, \hdots, n-1$. The interesting part was that it recognized the redundancy in the equations (see rigorous discussions below), and thus reduced the number of equations to be solved. It then provided closed form solutions for $n=2$ and $n=3$ in a blink of an eye! I verified the results and they were correct. Encouraged by this success, I then asked it to find closed form solutions for $n=\LRc{4,5}$. Again, it provided the solutions in a blink of a eye, which were again correct after my verification. Before we proceed to general $n$, let us first discuss the redundancy in \cref{eq:freeParams}, which will be useful for AI later.
 
 The redundancy can be predicted from two places: 1) the role of $a$ and $b$ in \Cref{defi:her_def} can be interchanged, and 2) the equality (up to a sign) of $w_j^a$ and $w_j^b$ in \Cref{propo:first_p}. To make this rigorous, we first show that coefficient matching conditions are both necessary and sufficient for \cref{eq:freeParams} to hold.

\begin{lemma}[Matching conditions]
 \Cref{eq:freeParams} holds if and only if
 \begin{equation}
 \alpha_j^a = w_j^a, \text{ and }
 \alpha_j^b = w_j^b, \quad \forall j=0, \hdots, n-1.
\label{eq:HermiteMatching}
 \end{equation}
 \label{lem:matchingConditions}
\end{lemma}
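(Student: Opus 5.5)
The sufficiency direction is immediate. If $\alpha_j^a = w_j^a$ and $\alpha_j^b = w_j^b$ for every $j=0,\dots,n-1$, then the two sides of \cref{eq:freeParams} are the same finite linear combination of $f(a),\dots,f^{(n-1)}(a),f(b),\dots,f^{(n-1)}(b)$. The identity therefore holds for every $f\in C^n[a,b]$, and nothing more needs to be said.

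Necessity rests on the fact that the $2n$ linear functionals $f\mapsto f^{(j)}(a)$ and $f\mapsto f^{(j)}(b)$, $j=0,\dots,n-1$, are linearly independent on $C^n[a,b]$. Subtracting the right side of \cref{eq:freeParams} from the left gives
\[
\sum_{j=0}^{n-1}\LRp{w_j^a-\alpha_j^a} f^{(j)}(a)+\sum_{j=0}^{n-1}\LRp{w_j^b-\alpha_j^b} f^{(j)}(b)=0\quad \forall f\in C^n\LRs{a,b}.
\]
It then suffices to exhibit, for each fixed index $m$ and each endpoint, a test function in $C^n[a,b]$ that isolates exactly one of these coefficients.

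The test functions come from the Hermite cardinal basis. I will take the polynomial $\ell_m^a$ of degree at most $2n-1$ with
\[
(\ell_m^a)^{(j)}(a)=\begin{cases}1, & j=m,\\ 0, & j\neq m,\end{cases}\qquad (\ell_m^a)^{(j)}(b)=0 \quad \text{for all } j=0,\dots,n-1.
\]
Existence can be justified directly from \Cref{defi:her_def}, which supplies a Hermite interpolant matching any prescribed derivative data. To avoid any circularity, the explicit choice
\[
\ell_m^a(x)=\frac{(x-a)^m}{m!}\,\frac{(x-b)^n}{(a-b)^n}\,q(x)
\]
also works, where $q$ is a polynomial chosen to fix the higher-order Taylor terms at $a$. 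A simpler route is to invoke unisolvence of two-point Hermite interpolation: the map sending a polynomial of degree at most $2n-1$ to its $2n$ endpoint derivative values is linear between spaces of dimension $2n$. It is injective, because a polynomial with zeros of order $n$ at both $a$ and $b$ is divisible by $\LRp{(x-a)(x-b)}^n$, which has degree $2n$. Hence the map is bijective. Polynomials are smooth, so $\ell_m^a\in C^n[a,b]$. Substituting $\ell_m^a$ into the displayed identity gives $w_m^a=\alpha_m^a$, and the analogous $\ell_m^b$ gives $w_m^b=\alpha_m^b$.

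The only step requiring care is this construction of the isolating test functions, i.e.\ the linear-independence argument. The dimension-count proof of Hermite unisolvence settles it cleanly. Everything else is bookkeeping. The lemma does not need the explicit forms of $\alpha_j^a,\alpha_j^b$ from \cref{eq:iterate} or of $w_j^a,w_j^b$ from \Cref{propo:first_p}, since only the fact that they are constants independent of $f$ is used.
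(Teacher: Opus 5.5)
Your proof is correct, but it takes a different route from the paper.

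The paper isolates each coefficient with \emph{localized} test functions $f_{j_0}(x)=\frac{(x-a)^{j_0}}{j_0!}\phi(x)$, where $\phi$ is a smooth cut-off equal to $1$ near $a$ and $0$ near $b$. The Leibniz rule then gives $f_{j_0}^{(j)}(a)=\delta_{j,j_0}$ and $f_{j_0}^{(j)}(b)=0$, and reflecting across $(a+b)/2$ handles the coefficients at $b$.

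You instead use \emph{global} polynomial test functions, namely the Hermite cardinal basis of $\mathcal{P}^{2n-1}[a,b]$. You get their existence from unisolvence: the dimension count, plus the fact that a polynomial with $n$-fold zeros at $a$ and $b$ is divisible by $((x-a)(x-b))^n$.

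What your route buys:
\begin{itemize}
\item It is purely algebraic and needs no smooth bump function.
\item It proves slightly more. The matching conditions are already forced if the identity holds only for $f\in\mathcal{P}^{2n-1}[a,b]$. This is the same class of test functions the paper later uses in its redundancy theorem.
\end{itemize}

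What the paper's route buys: it is explicit and local. The $a$-data and $b$-data decouple by support rather than through a linear-algebra argument. This is the standard uniqueness argument from distribution theory, at the cost of invoking the existence of a $C^\infty$ (or at least $C^n$) cut-off.

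Two small remarks on your write-up:
\begin{itemize}
\item Your alternative ``explicit choice'' becomes rigorous once you specify $q$. Take $q$ to be the Taylor polynomial of degree $n-m-1$ of $(a-b)^n/(x-b)^n$ about $a$; this gives degree $2n-1$ and the required data at both endpoints.
\item Appealing to the definition of $H_n$ by itself is circular, as you noticed. The unisolvence argument is the one to keep.
\end{itemize}
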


The sufficiency is clear, but the converse turned out to be more technical. For $n=1$, we simply take a smooth cut-off function\footnote{Any of us who had prior training in distributional theory or $L^p$ spaces or Sobolev spaces knows this fact \cite{folland1999real, hirsch1976differential}.} $\phi$ such that 
$\phi = 1$ on a neighborhood of $a$,
and $\phi\equiv 0$ on a neighborhood of $b$, and then take $f(x) = \phi(x)$ to conclude from \cref{eq:freeParams} that $\alpha_0^a = w_0^a$. Next, simply reflecting $\phi$ across the point $x = \frac{a+b}{2}$, i.e. taking $f(x) = \phi\LRp{a + b -x}$ yields $\alpha_0^b = w_0^b$. For general $n$, the idea is similar. In particular, we need to test \cref{eq:freeParams} with different functions $f$ such that each time we either have $\alpha_j^a = w_j^a$ or $\alpha_j^b = w_j^b$, for all $j=0, \hdots, n-1$. I spent a good amount of time thinking but was not successful in constructing these functions. I gave one of the AI platforms a try, and to my pleasant surprise, it could construct these functions from $\phi$! The construction (see \cref{eq:fj}) is in fact simple (now I know) and I wonder if the AI platform has seen this as part of its training data. 

\begin{proof}[Proof of \Cref{lem:matchingConditions}]
 The sufficiency is clear, and we focus on the necessity. We start by defining the difference linear functional
\[
D(f):=\sum_{j=0}^{n-1}(\alpha_j^b-w_j^b) f^{(j)}(b)
 + \sum_{j=0}^{n-1}(\alpha_j^a-w_j^a) f^{(j)}(a).
\]
By \eqref{eq:freeParams}, we have $D(f)=0$ for all $f\in C^n[a,b]$.
Let us define $d_j^a:=\alpha_j^a-w_j^a$, $d_j^b:=\alpha_j^b-w_j^b$, and 
\begin{equation}
f_{j_0}(x):=\frac{(x-a)^{j_0}}{j_0!}\,\phi(x).
\label{eq:fj}
\end{equation}
By construction, all derivatives of $\phi$ vanish at $a$ and $b$, and thus
\[
f_{j_0}^{(j)}(a)=\delta_{j,j_0}, \text{ and } f_{j_0}^{(j)}(b)=0, \qquad j=0,\dots,n-1.
\]
Substituting into $D(f)=0$ gives
\[
0=D(f_{j_0})
=\sum_{j=0}^{n-1} d_j^a f_{j_0}^{(j)}(a)
+\sum_{j=0}^{n-1} d_j^b f_{j_0}^{(j)}(b)
=d_{j_0}^a,
\]
so $d_{j_0}^a=0$. Since $j_0$ was arbitrary, $d_j^a=0$ for all $j$.
Similarly, i.e. reflecting $f_{j_0}$ across the point $x = \frac{a+b}{2}$, we can show that $d_j^b=0$ for all $j$.
Therefore $\alpha_j^a=w_j^a$ and $\alpha_j^b=w_j^b$ for all $j=0,\dots,n-1$.
\end{proof}

\subsection{Iterative development of the redundancy proof (iterative Human-AI)}

From \Cref{lem:matchingConditions} we see that there are $2n$ equations in \cref{eq:freeParams}, but we have only $n$ unknown parameters $c,\ \delta_0,\dots, \delta_{n-2}$. Thus, $n$ equations must be redundant, just like the cases $n=\LRc{2,3,4,5}$ that we have discussed. To give AI productive instruction for general $n$, I first investigated $j = 0$. The two equations $\alpha_0^a = w_0^a$ and $\alpha_0^b = w_0^b$ are:
\[
\begin{aligned}
 -(a+c) &= (b-a)n\sum_{k=0}^{n-1}\binom{k}{0}\frac{(n+k-1)!}{(n+k+1)!} = \frac{b-a}{2}, \\
 b + c &= (b-a)n\sum_{k=0}^{n-1}\binom{k}{0}\frac{(n+k-1)!}{(n+k+1)!} = \frac{b-a}{2}.
\end{aligned}
\]
Solving either of the preceding equations for $c$ gives
\[
c = -\frac{a+b}{2},
\]
independent of $n$. That is, either $\alpha_0^a = w_0^a$ or $\alpha_0^b = w_0^b$ is redundant when the other is solved. I guessed that this would hold true for any $j$, that is, either $\alpha_j^a = w_j^a$ or $\alpha_j^b = w_j^b$ is automatically valid, once the other holds. 

Assume there exists $\boldsymbol{\theta}$ such that $\alpha_j^a = w_j^a, 0 \le j \le n-1$ (to be proved below). We need somehow to tie the RIBP \cref{eq:iterate} and the Hermite quadrature \cref{eq:her_rewrite} in order to relate $\alpha_j^b $ and $ w_j^b$. I told the same AI platform that the Hermite weights $w_j^a$ and $w_j^b$ are symmetric (up to the sign), and thus the RIBP weights $\alpha_j^a$ and $\alpha_j^b$ in \cref{eq:iterate} may possess some similar symmetry, and thus the redundancy in the equations would be obvious. The AI came back to me with a proof of symmetry for the RIBP weights by reflecting $f$ around $x = \LRp{a+b}/2$. However, the proof was wrong as it assumed the symmetry of the RIBP weights. When I "confronted" it, the AI provided another proof assuming the symmetry of the polynomial kernel $P\LRp{x,\boldsymbol{\theta}}$ (to be defined below), but this was not correct either in general. I then told the AI that the redundancy, and hence the symmetry of RIBP weights, happens obviously when they match Hermite weights, and in that case, the polynomial kernel can be symmetric. However, (I told that AI that) the redundancy was what we needed to prove in the first place. I told the AI that we need to combine the RIBP \cref{eq:iterate} and the Hermite quadrature \cref{eq:her_rewrite} in order to relate $\alpha_j^b $ and $ w_j^b$.
The AI then amazed me with a clean proof that avoids all the aforementioned issues, and that I now present in a constructive manner.

The classical result \cref{eq:int_h} gives us the desired connection by taking $f \in \mc{P}^{2n-1}\LRs{a,b}$, where $\mc{P}^{2n-1}\LRs{a,b}$ is the set of polynomials of order at most $2n-1$, as the Hermite interpolation error $\frac{f^{(2n)}(\varepsilon)}{(2n)!}\LRp{(x-a)(x-b)}^n$ vanishes. In particular, we have
\begin{multline*}
\sum_{j=0}^{n-1}w_j^a f^{(j)}(a)+\sum_{j=0}^{n-1}w_j^b f^{(j)}(b) = \int_a^b H_n(f;x) dx \\= \int_a^b f(x) dx = \sum_{j=0}^{n-1}\alpha_j^a f^{(j)}(a)\\+\sum_{j=0}^{n-1}\alpha_j^b f^{(j)}(b) + \int_a^b (-1)^n f^{(n)}(x)
\underbrace{\left(
 \frac{(x+c)^n}{n!}
 + \sum_{i=0}^{n-2} \delta_i \frac{x^i}{i!}
\right)}_{=: P\LRp{x,\boldsymbol{\theta}}}\,dx,
\end{multline*}
for any $f \in \mc{P}^{2n-1}\LRs{a,b}$.
After using the assumption $\alpha_j^a = w_j^a$, we obtain
\begin{equation}
\sum_{j=0}^{n-1}\LRp{w_j^b - \alpha_j^b} f^{(j)}(b) = (-1)^n \int_a^b f^{(n)}(x) P\LRp{x,\boldsymbol{\theta}}\,dx,
\label{eq:orthogonality}
\end{equation}
for any $f \in \mc{P}^{2n-1}\LRs{a,b}$. What remains is to choose $f \in \mc{P}^{2n-1}\LRs{a,b}$ judiciously so that the right hand side vanishes while isolating $\LRp{w_j^b - \alpha_j^b}$ out on the left hand side. To annihilate the right hand side, the simplest choice is $f \in \mc{P}^{n-1}\LRs{a,b}$, to single out $\LRp{w_j^b - \alpha_j^b}$, the choice 
would not be obvious to me, hadn't I seen the polynomial factor in \cref{eq:fj}. Indeed, by taking $f = \frac{(x-a)^{j_0}}{j_0!}$ for $0 \le j_0 \le n-1$, we have
\begin{equation}
 w_{j_0}^b - \alpha_{j_0}^b = 0, \quad \forall 0 \le j_0 \le n-1,
\label{eq:wEqualAlphaAtB}
\end{equation}
which concludes the proof of the following theorem.
\begin{theorem}[Redundancy]
 $\alpha_j^a = w_j^a, 0\le j \le n-1$ if and only if $\alpha_j^b = w_j^b, 0\le j \le n-1$.
 \label{thm:redundancy}
\end{theorem}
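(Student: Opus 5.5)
The plan is to prove both directions by testing against polynomials, using the exactness of Hermite quadrature on $\mc{P}^{2n-1}\LRs{a,b}$ together with the RIBP identity \cref{eq:iterate}, which holds for every $\boldsymbol{\theta}$ and every sufficiently smooth $f$. The forward direction is essentially the argument sketched just before the statement, and I would make it precise as follows. Fix $\boldsymbol{\theta}$ with $\alpha_j^a = w_j^a$ for all $j$. Take any $f\in\mc{P}^{n-1}\LRs{a,b}$. Then $f^{(n)}\equiv 0$, so the integral term in \cref{eq:iterate} vanishes. Also $f\in\mc{P}^{2n-1}$, so \cref{her_error} gives $H_n(f;\cdot)=f$, and hence $\int_a^b f = \sum_j w_j^a f^{(j)}(a)+\sum_j w_j^b f^{(j)}(b)$ by \cref{eq:her_rewrite}. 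Subtracting the two expressions for $\int_a^b f$ and cancelling the $a$-terms yields \cref{eq:orthogonality} with zero right-hand side:
\[
\sum_{j=0}^{n-1}\LRp{w_j^b-\alpha_j^b}f^{(j)}(b)=0 .
\]

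The step I regard as the key (though easy) one is choosing test polynomials that isolate each coefficient at $b$. The natural choice is $f=(x-b)^{j_0}/j_0!$ rather than the $(x-a)^{j_0}/j_0!$ written in the text, since then $f^{(j)}(b)=\delta_{j,j_0}$ for $0\le j\le n-1$. This choice immediately gives $w_{j_0}^b=\alpha_{j_0}^b$ for every $j_0$. If one insists on $(x-a)^{j_0}/j_0!$, the resulting system is triangular in $j$ with nonzero diagonal entries $1$ (indeed $f^{(j)}(b)=(b-a)^{j_0-j}/(j_0-j)!$ for $j\le j_0$). Back-substitution starting from $j_0=0$ then gives the same conclusion, so either choice works.

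The converse is symmetric. Assume $\alpha_j^b=w_j^b$ for all $j$, and repeat the argument with $f\in\mc{P}^{n-1}$, cancelling the $b$-terms instead, to get $\sum_j (w_j^a-\alpha_j^a)f^{(j)}(a)=0$. Testing with $f=(x-a)^{j_0}/j_0!$, so that $f^{(j)}(a)=\delta_{j,j_0}$, yields $\alpha_j^a=w_j^a$ for all $j$.

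The only thing to check carefully is that \cref{eq:iterate} is an identity valid for all $\boldsymbol{\theta}$ and all $f\in C^n$, which the paper established. Given that, the main obstacle is merely the bookkeeping of which endpoint's delta-functional the test polynomials realize.
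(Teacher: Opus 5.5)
Your proof is correct and follows the paper's argument. You combine the RIBP identity \cref{eq:iterate} with exactness of Hermite quadrature on polynomials, restrict to $f\in\mc{P}^{n-1}\LRs{a,b}$ so the kernel integral vanishes, and isolate the endpoint coefficients with monomial test functions. Your choice $f=(x-b)^{j_0}/j_0!$ gives $f^{(j)}(b)=\delta_{j,j_0}$ directly, whereas the paper's $(x-a)^{j_0}/j_0!$ tacitly needs the unit-triangular back-substitution you describe; you also write out the converse direction, which the paper leaves implicit.
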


\begin{remark}
To entertain myself, I asked if the AI could provide another proof of the exactness of Hermite interpolation for $f \in \mc{P}^{2n-1}\LRs{a,b}$ without using the classical error representation \cref{her_error}. Without letting me down, the AI gave me an alternative correct (not shown here) using more elementary facts about polynomial and induction. 
\end{remark}

Now, revisiting \cref{eq:orthogonality} and using \Cref{thm:redundancy} yield interesting properties of the polynomial kernel $P\LRp{x,\boldsymbol{\theta}}$.
\begin{corollary}
 Suppose there exists a set of parameters $\boldsymbol{\theta}$ such that $\alpha_j^a = w_j^a, 0\le j \le n-1$. Then
 \begin{equation}
 \int_a^b f(x) P\LRp{x,\boldsymbol{\theta}}\,dx = 0, \quad \forall f \in \mc{P}^{n-1}\LRs{a,b}.
\label{eq:KernelOrthogonality}
 \end{equation}
\label{coro:orthogonality}
\end{corollary}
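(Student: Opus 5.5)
We work from the identity \cref{eq:orthogonality}, which holds for every $f\in\mc{P}^{2n-1}\LRs{a,b}$ once $\alpha_j^a=w_j^a$ for $0\le j\le n-1$. By \Cref{thm:redundancy}, this hypothesis also gives $\alpha_j^b=w_j^b$ for all $0\le j\le n-1$. Hence the left-hand side of \cref{eq:orthogonality} vanishes identically. We are left with
\[
\int_a^b f^{(n)}(x)\,P\LRp{x,\boldsymbol{\theta}}\,dx=0,\qquad \forall f\in\mc{P}^{2n-1}\LRs{a,b}.
\]

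It remains to show that the set of $n$-th derivatives $\{f^{(n)}: f\in\mc{P}^{2n-1}\LRs{a,b}\}$ is exactly $\mc{P}^{n-1}\LRs{a,b}$. Differentiation $n$ times maps polynomials of degree at most $2n-1$ into polynomials of degree at most $n-1$. The map is onto: given $g\in\mc{P}^{n-1}$, take $f$ to be an $n$-fold antiderivative of $g$, for instance $f(x)=\int_a^x \frac{(x-t)^{n-1}}{(n-1)!}g(t)\,dt$, or termwise $x^k\mapsto \frac{k!}{(k+n)!}x^{k+n}$. This $f$ has degree at most $2n-1$ and satisfies $f^{(n)}=g$. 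Substituting $f$ into the displayed identity gives $\int_a^b g\,P\,dx=0$ for every $g\in\mc{P}^{n-1}$, which is \cref{eq:KernelOrthogonality}.

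There is no real obstacle here. The only point needing care is that \cref{eq:orthogonality} was derived for all $f\in\mc{P}^{2n-1}$, not only for the special test polynomials used in the proof of \Cref{thm:redundancy}. That derivation relies only on the exactness of Hermite quadrature on $\mc{P}^{2n-1}$, via \cref{eq:int_h}, and on the RIBP identity \cref{eq:iterate}, both of which hold for arbitrary such $f$. So the corollary follows directly once the surjectivity of $f\mapsto f^{(n)}$ is noted.
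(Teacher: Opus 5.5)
Your proof is correct and takes the paper's own route: the paper gets the corollary by revisiting \cref{eq:orthogonality}, using \Cref{thm:redundancy} to make its left-hand side vanish, and concluding that $P\LRp{x,\boldsymbol{\theta}}$ is orthogonal to every $n$-th derivative of a polynomial in $\mc{P}^{2n-1}\LRs{a,b}$. You additionally spell out that $f\mapsto f^{(n)}$ maps $\mc{P}^{2n-1}$ onto $\mc{P}^{n-1}$, a step the paper leaves implicit.
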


While \Cref{coro:orthogonality} was not what I aimed for, as I wanted to prove only \Cref{thm:redundancy}, it was a nice byproduct collaboration with AI. These discussions and the success of the above proof of \Cref{thm:redundancy} also led me to an interesting symmetry of the polynomial kernel $P\LRp{x,\boldsymbol{\theta}}$.
\begin{lemma}
If the kernel orthogonality \cref{eq:KernelOrthogonality} holds, then
 $P\LRp{x,\boldsymbol{\theta}}$ is, up to a sign, symmetric around $x = (a+b)/2$, i.e.,
\[
P\LRp{a+b-x,\boldsymbol{\theta}} = 
(-1)^nP\LRp{x,\boldsymbol{\theta}}.
\] 
\end{lemma}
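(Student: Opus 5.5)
Let $m:=(a+b)/2$ and $Q(x):=P\LRp{a+b-x,\boldsymbol{\theta}}-(-1)^nP\LRp{x,\boldsymbol{\theta}}$. The plan is to show $Q\equiv 0$ by a uniqueness argument: first show that $Q$ has degree at most $n-1$, then show that $Q$ is orthogonal to $\mc{P}^{n-1}\LRs{a,b}$; together these force $Q=0$.

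\textbf{Degree reduction.} By definition, $P\LRp{x,\boldsymbol{\theta}}=\frac{(x+c)^n}{n!}+R(x)$ with $\deg R\le n-2$. Its leading term is $x^n/n!$. Substituting $x\mapsto a+b-x$ gives leading term $(-1)^n x^n/n!$. Hence in $Q$ the $x^n$ terms cancel, and $\deg Q\le n-1$. This is the key structural input: the free parameters $\delta_i$ only enter below degree $n-1$, and the monic-type leading term is fixed.

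\textbf{Orthogonality of $Q$.} Take any $f\in\mc{P}^{n-1}\LRs{a,b}$. The change of variables $x\mapsto a+b-x$ maps $[a,b]$ onto itself, so
\[
\int_a^b f(x)P\LRp{a+b-x,\boldsymbol{\theta}}\,dx=\int_a^b f(a+b-x)P\LRp{x,\boldsymbol{\theta}}\,dx=0 .
\]
The last equality uses the kernel orthogonality \cref{eq:KernelOrthogonality}, since $f(a+b-\cdot)$ is again in $\mc{P}^{n-1}\LRs{a,b}$. The same hypothesis gives $\int_a^b fP\,dx=0$ directly. Therefore $\int_a^b f(x)Q(x)\,dx=0$ for every $f\in\mc{P}^{n-1}\LRs{a,b}$.

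\textbf{Conclusion.} Since $Q$ itself lies in $\mc{P}^{n-1}\LRs{a,b}$, choose $f=Q$. This gives $\int_a^b Q(x)^2\,dx=0$, so $Q\equiv 0$ on $[a,b]$, hence as a polynomial, which is exactly the claimed identity $P\LRp{a+b-x,\boldsymbol{\theta}}=(-1)^nP\LRp{x,\boldsymbol{\theta}}$.

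The only step needing care is the degree count, in particular confirming that the $x^n$ coefficients cancel rather than add. This works for both parities of $n$ precisely because of the factor $(-1)^n$ in the statement, which matches the leading term $(-1)^n x^n/n!$ of the reflected polynomial. As a cross-check, $P$ is then $1/n!$ times the degree-$n$ polynomial orthogonal to $\mc{P}^{n-1}\LRs{a,b}$ on $[a,b]$, i.e.\ a shifted Legendre polynomial, which has exactly this parity property.
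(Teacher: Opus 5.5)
Your proof is correct and complete. Since the $\delta_i$ only enter in degree at most $n-2$, the polynomials $P\LRp{a+b-x,\boldsymbol{\theta}}$ and $(-1)^nP\LRp{x,\boldsymbol{\theta}}$ have the same leading coefficient $(-1)^n/n!$, so $Q\in\mc{P}^{n-1}\LRs{a,b}$. The reflection $x\mapsto a+b-x$ maps both $[a,b]$ and $\mc{P}^{n-1}\LRs{a,b}$ to themselves, so $Q$ inherits \cref{eq:KernelOrthogonality}. Testing with $f=Q$ then forces $Q\equiv 0$.

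The paper states this lemma without proof and defers details to the companion technical paper, so there is no written argument to compare yours against.

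Your closing cross-check deserves more weight than a remark, because it proves more. As $\boldsymbol{\theta}$ ranges over $\mathbb{R}^n$, $P\LRp{\cdot,\boldsymbol{\theta}}$ ranges bijectively over all degree-$n$ polynomials with leading coefficient $1/n!$. Hence \cref{eq:KernelOrthogonality} pins it down as $P\LRp{x,\boldsymbol{\theta}}=\frac{h^n}{n!}L_n\LRp{\frac{x-m}{h}}$, where $h=(b-a)/2$ and $L_n$ is the monic Legendre polynomial. Three consequences follow:
\begin{itemize}
\item The lemma is immediate from $L_n(-t)=(-1)^nL_n(t)$.
\item The parameters $\boldsymbol{\theta}$ satisfying \cref{eq:KernelOrthogonality} are unique.
\item They can be read off explicitly. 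The $t^{n-2}$ coefficient $-n(n-1)/(2(2n-1))$ of $L_n$ gives exactly the paper's $\delta_{n-2}=-(b-a)^2/(8(2n-1))$. Re-expanding the powers of $x-m$ in the monomials $x^i/i!$ reproduces its $\delta_{n-3}$ and $\delta_{n-4}$.
\end{itemize}
This also explains why those expressions grow messy even though the underlying kernel is classical.
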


\subsection{Answer to the key questions (Human guide, AI extraordinary symbolic skill)}
We are now in a position to answer the {\bf key questions} posed after \cref{eq:freeParams}. The first question is straightforward. Indeed, \Cref{thm:redundancy} allows us to work with one set of $n$ equations $\alpha_j^a = w_j^a, 0\le j \le n-1$. First the number of equations is the same as the number of free parameters. Second, these equations possess an obvious triangular structure that allows us to solve numerically for all free parameters in a straightforward manner. In particular, for $j=0$, the corresponding equation $\alpha_0^a = w_0^a$ always yields $c = -\LRp{a+b}/2$ as we have shown above. For $j=1$, the corresponding equation $\alpha_1^a = w_1^a$ gives us a linear monomial equation in terms of $\delta_{n-2}$. If we continue this triangular structure, for general $1\le j \le n-1$, the corresponding equation $\alpha_j^a = w_j^a$ gives us a recursive formula for $\delta_{n-1-j}$
\[
\delta_{n-1-j} = (-1)^{j+1}w_j^a -\frac{(a+c)^{j+1}}{(j+1)!}
 - \sum_{i=1}^{j-1} \delta_{\,i+n-1-j}\frac{a^{i}}{i!}.
\]

The answer to the second key question requires either a human with an extraordinary symbolic manipulation/computational skill or a software with that skill. Since I am not the former, and neither are my students, I resorted to the latter option. I started with a greedy hands-free question for an AI platform: "Could you find simple closed forms for all free parameters in terms of $n$, $a$, and $b$?" The AI "honestly" responded: "yes, in principle, however, the resulting expressions rapidly become complicated as $n$ increases (nested finite sums with binomial coefficients), and do not appear to admit a simple closed form in $n$". I then provided a more constructive and incremental request: "Consider $j=1$, can you simplify and solve for $\delta_{n-2}$ as it does not involve other $\delta$?" After a long, complex, and tedious symbolic computations, it yielded 
\[
\delta_{n-2} = -\frac{(b-a)^2}{8(2n-1)}, \quad \forall n \ge 2,
\]
and provided the (bonus) consistency check for $n \in \LRc{2,3,4,5}$ that we worked out together previously.

Encouraged by the success, I asked the AI to find a closed form expression for $\delta_{n-3}$ from $j=2$, given the fact that we already had $\delta_{n-2}$. It again went through a series of complicated symbolic computations and found $\delta_{n-3}$. However, it didn't provide a bonus consistency check for the cases $n=\LRc{3,4,5}$ that we already knew. I asked it to do the consistency check, and it admitted that the result was wrong and then argued that finding a simple closed-form expression for $\delta_{n-3}$ passing the consistency check was not tractable. I looked into its symbolic derivation and computation carefully and I suspected its computation for $w_2^a$ was incorrect. I asked the AI to redo the calculations, and this time it provided the correct formula:
\[
\delta_{n-3} = \frac{(a+b)(b-a)^2}{16(2n-1)}, \quad \forall n \ge 3.
\]
I continued instructing the AI to find the correct expression for $\delta_{n-4}$ passing the consistency check. The detailed derivations required to arrive at the final expression for $\delta_{n-4}$ were beyond my patience, my time budget, and analytical skill (if I were to do it myself):
\[
\delta_{n-4} = \frac{(b-a)^2\bigl((b-a)^2 + (6-4n)(a+b)^2\bigr)}
 {128(2n-3)(2n-1)}, \quad \forall n \ge 4.
\]

At this point, I decided it was enough to convince myself there was no point to go further, as the expressions for these parameters were progressively more complex, and they would not add more value to the paper. 

\textbf{Initial AI Output:} gave a ``lazy" answer ``I could not find closed form solutions", then under human constructive guidance, found solutions for some parameters, and provided incorrect solution for one case.

\textbf{Human Intervention:} Detected errors by working through specific cases ($n=2, 3$) by hand, identified the pattern, and provided insights for AI to correct its errors.

This iterative refinement exemplifies effective collaboration: AI handles lengthy tedious algebraic details, and human catches errors as well as guides corrections.

\subsection{Tighter error bounds (AI provides extra facts, human exploits these facts)}

The original error bounds that my students provided assumed the uniform boundedness of the $n$-derivative (see \cite[eq. (7), Theorem 1, Theorem 2]{VillatoroKrishnanunniBuiThanh}). These bounds are the first thing one could do to derive an upper bound for the Hermite quadrature error. To see what AI could do using the same assumption for $n=2$, that is, $\snor{f^{(2)}\LRp{x}} \le M < \infty$, I tasked an AI platform to provide a tight error bound. In the output LaTeX document, the AI documented all the steps including confusion and unsuccessful tries. Two important findings for me were: i) it could come up with the same bound as in \cite[Theorem 1]{VillatoroKrishnanunniBuiThanh}
\begin{equation}
 \snor{E_2} \le M\frac{(b-a)^3(\sqrt{3})}{54},
\label{eq:originalBoundn2}
\end{equation}
 and, perhaps more important, ii) it computed the integral of the polynomial kernel $K_2\LRp{x}$ as one of its efforts (that I didn't ask and didn't know) and it said: "Interesting! The kernel integrates to zero." The latter finding (which was only obvious to me later owing to \Cref{coro:orthogonality}) was a simple finding, but it was what I could exploit to improve the original bound \cref{eq:originalBoundn2}. In particular, from my prior experience in numerical analysis, it suggested that I could add a constant for free to obtain an improved error bound:
 \begin{equation}
 |E_2| \leq \nor{\Delta}_\infty \cdot \frac{(b-a)^3\sqrt{3}}{54},
\label{eq:improvedBoundn2}
 \end{equation}
 where $\Delta(x) := {f^{(2)}(x) - c}$, with $c$ being the midrange of $f^{(2)}(x)$ defined as
 \[
 c:= \frac{\inf_{\LRs{a,b}} f^{(2)}(x) 
 +\sup_{\LRs{a,b}} f^{(2)}(x) }{2}.
 \]
 \begin{proof}
 Since $\int_a^b K_2(x)\, dx = 0$, we have
\begin{align*}
\snor{E_2} &= \snor{\int_a^b f^{(2)}(x)K_2(x)\, dx} 
= \snor{\int_a^b \Delta(x)K_2(x)\, dx} \\
&\le \nor{\Delta}_\infty\int_a^b \snor{K_2(x)}\, dx \le \nor{\Delta}_\infty \cdot \frac{(b-a)^3\sqrt{3}}{54}.
\end{align*}
 \end{proof}

 Note that the error bound in \cref{eq:improvedBoundn2} is tighter than \cref{eq:originalBoundn2} owing to the following fundamental inequality for the deviation from the midrange
 \[
 \nor{\Delta}_\infty \le \nor{f^{(2)}}_\infty.
 \]
\Cref{tab:improvedBounds} shows the improvement of the bound \cref{eq:improvedBoundn2} over the original bound \cref{eq:originalBoundn2} for five functions (that the AI picked and plotted for me). Note that the case with $\sin(2\pi x)$  shows
no improvement because the optimal constant $c$ is zero.


\begin{table}[t]
\caption{Error Bound Improvements for ($n=2$) on $[0,1]$}
\label{tab:improvedBounds}
\centering
\tablefont
\begin{tabular*}{17.5pc}{@{}lccc@{}}
\toprule
\textbf{Function} & \textbf{Original} & \textbf{Improved} & \textbf{Factor} \\
\colrule
$x^3$ & $1.92 \times 10^{-1}$ & $9.62 \times 10^{-2}$ & 2.0$\times$ \\[3pt]
$e^x$ & $8.72 \times 10^{-2}$ & $2.76 \times 10^{-2}$ & 3.2$\times$ \\[3pt]
$\sin(2\pi x)$ & $1.27$ & $1.27$ & 1.0$\times$ \\[3pt]
$\log(x+1)$ & $3.21 \times 10^{-2}$ & $1.20 \times 10^{-2}$ & 2.7$\times$ \\[3pt]
$x^4-2x^3+x^2$ & $6.42 \times 10^{-2}$ & $4.81 \times 10^{-2}$ & 1.3$\times$ \\
\botrule
\multicolumn{4}{@{}p{17.5pc}@{}}{\footnotesize Improved bound exploits kernel orthogonality to subtract optimal constant $c$ from $f^{(2)}$. No improvement for $\sin(2\pi x)$ since the optimal constant is zero.}
\end{tabular*}
\end{table}


The bound can be improved even further if a higher derivative, say $f^{(n+k)}$, with $k\le n$, is bounded by replacing $c$ with the best $(k-1)$th-order polynomial approximation to $f^{(n)}$ in the infinity norm. We can also play the same trick for the $L^2$-norm.

\section{TECHNICAL ACHIEVEMENTS SUMMARY}

For readers interested in the mathematical outcomes beyond the collaboration 
process, we briefly summarize our technical contributions:
\begin{enumerate}
    \item  Exact error representation for Hermite quadrature requiring only $n$-th 
   derivatives instead of $2n$-th derivatives
   
\item Proof that coefficient matching conditions are necessary and sufficient 
   (\Cref{lem:matchingConditions} on Matching Conditions)
   
\item Redundancy theorem showing that solving $n$ equations from $2n$ is sufficient to determine a unique set of 
   coefficients
   
\item Closed-form solutions for free parameters up to $n=4$
   
\item Orthogonality properties of the polynomial kernel

\item Improvements on the error bounds
\end{enumerate}

Detailed proofs and extensions are provided in the accompanying technical paper \cite{BuiThanhVillatoroKrishnanunni2026HermiteQuadratureError}.

\section{PATTERNS OF SUCCESS AND FAILURE}

Our experience reveals clear patterns in what works and what doesn't in human-AI mathematical collaboration. It is important to emphasize that AI is not a solution, but a highly capable but unreliable research assistant.

\subsection{Where AI Excelled}

\begin{enumerate}
 \item {\bf Algebraic Manipulation}: AI systems handled complex symbolic manipulations reliably once the setup was correct and the results are verified by human. This saves hours or days of tedious calculation.
 \item {\bf Systematic Exploration}: When asked to explore variations (different values of $n$, alternative proof strategies), AI provided comprehensive coverage
faster than a human working alone.
 \item {\bf Literature Synthesis}: AI rapidly identified relevant papers and synthesized key ideas, though citation accuracy required verification.

\item {\bf LaTeX Formatting}: AI produced well-formatted mathematical documents, handling complex equation environments and theorem styling.

\item {\bf Numerical Verification}: AI generated comprehensive test cases (not shown here) and verified specific cases much faster than a human.
\end{enumerate}

\subsection{Where AI Failed}
\begin{enumerate}
 \item {\bf Problem Formulation}: AI couldn't reliably generate research questions. Most significant mathematical insights and strategic directions came from human researchers.

\item {\bf Error Detection}: AI frequently produced outputs with subtle errors (wrong indices, incorrect signs, wrong results) that appeared plausible but were mathematically wrong.

\item {\bf Proof Validity}: AI could not reliably assess whether a proof was valid or novel. It sometimes claimed to verify results that were actually incorrect.

\item {\bf Strategic Directions}: When stuck, AI would often continue down unproductive paths. Human intuition was essential for course correction.

\item {\bf Mathematical Judgment}: In general, AI could not determine whether results were interesting, surprising, or worth publishing. It tended to flatter results/findings and sometimes even said that the results were publishable.
\end{enumerate}

\subsection{Critical Failure Mode: Plausible Nonsense}

The most dangerous failure mode was AI generating mathematical statements that appeared correct but contained fundamental errors. For example, when exploring various conditions on $f^{(2)}$ and their corresponding bounds, it made a mistake 
 that seemed convincing at first glance: assuming $f^{(2)}$ has constant sign and then yield perfect effort bound by implicitly assuming that $f^{(2)}$ was a constant. Only detailed verification revealed the error. There are many other occasions where AI provided plausible nonsense. Another example was when I wanted to show that $\alpha_j^b = (-1)^j \alpha_j^a, \qquad j=0,\dots,n-1$, and 
 the AI provided the following plausibly correct statement
\begin{quote}

Let $n\ge 1$. Suppose there exist constants 
$\alpha_j^a,\alpha_j^b$ ($j=0,\dots,n-1$) and a polynomial $K_n$
such that for every $f\in C^n[a,b]$ we have \cref{eq:iterate}.
Then the endpoint coefficients satisfies
\[
\alpha_j^b = (-1)^j \alpha_j^a, \qquad j=0,\dots,n-1.
\]
\end{quote}
\noindent with a believable proof using test functions from distributional theory, etc. The statement itself was too good to be true because that couldn't happen unless \cref{eq:freeParams} holds. When I went through the proof carefully I found a fundamentally incorrect, but plausibly correct fact that AI used (not shown here for brevity).

\begin{warningbox}[Warning Signs That AI May Be Wrong]
\begin{itemize}
\item Result seems too neat or simple for the problem complexity
\item Explanation is verbose but vague on critical details
\item Similar problems give inconsistent results
\item AI refuses to show intermediate steps
\item Numerical tests contradict theoretical predictions
\item Proof relies on unverified "well-known" facts
\end{itemize}
\end{warningbox}

This highlights a critical point: \textbf{AI assistance without expert human verification is actively dangerous.}

\subsection{Connecting Observed Failures to LLM Reliability Literature}

Our AI failure experiences align with systematic evaluations of LLM mathematical capabilities, confirming  these are general limitations rather than isolated incidents. Three specific findings in \cite{FreitagNeurIPS2023} directly match our experiences:

{\em 1. Computational errors uncorrelated with difficulty:} The authors in \cite{FreitagNeurIPS2023} found ChatGPT ``executes complicated symbolic tasks with ease'' but ``fails on simple arithmetic operations,'' with errors uncorrelated with expression/problem complexity. Computational errors occurred in 36\% of algebra problems. Our experiences match this pattern: AI correctly handled complex Beta function evaluations but dropped factorials in simpler nested summations and integration by parts.

{\em 2. Unwavering confidence despite errors:} ChatGPT ``almost never expressed any form of uncertainty, even if its output has been completely wrong.'' Our observation: AI presented the incorrect RIBP formula with complete confidence, then confidently validated its own error when asked to verify. Only explicit counterexamples forced error recognition.

\textbf{3. Logical errors in proofs:} Frieder et al. documented ``e5\_5: circular logical argument'' as a systematic failure mode. Our observation: AI's symmetry proof assumed the symmetry it claimed to prove—exactly this error type.

\textbf{Implication:} Our detailed case study illustrates what Frieder et al.'s ``3.20/5.0 rating'' means in practice: AI makes multiple errors at every stage of graduate-level mathematics, requiring constant human detection and correction. Our collaboration succeeded because we maintained the rigorous verification their evaluation prescribes as essential.

\section{LESSONS AND GUIDELINES}

Based on this experience, we offer guidelines for productive human-AI collaboration in mathematics:

\subsection{Dos}
\begin{enumerate}
 \item {\bf Maintain Verification Culture/habit}: Treat all AI outputs as hypotheses to be tested, not facts to be accepted.

\item {\bf Use AI for Leverage}: Deploy AI on tasks that are tedious but verifiable (algebra, literature search, formatting). AI proof exploration could be surprisingly productive under human supervision and rigorous verification.

\item {\bf Preserve Human Judgment}: Keep research direction, problem formulation, and final assessment under human control.

\item {\bf Document Thoroughly}: Keep detailed records of AI contributions for both credit attribution and debugging.

\item {\bf Test Extensively}: Generate numerical examples, special cases, and limiting behavior checks.

\item {\bf Iterate Rapidly}: Use AI to quickly explore multiple approaches, then apply human judgment to select promising directions.
 
\end{enumerate}

\begin{keypoints}[Dos: Five Core Principles]
\begin{enumerate}
\item \textbf{Verify Everything}: Treat all AI outputs as hypotheses requiring verification
\item \textbf{Maintain Strategic Control}: Keep research direction and problem formulation human-led
\item \textbf{Use AI for Tedious Tasks}: Deploy AI for algebra, literature search, formatting, and proof explorations
\item \textbf{Test Extensively}: Generate numerical examples and special cases
\item \textbf{Document Transparently}: Keep detailed records of AI contributions
\end{enumerate}
\end{keypoints}

\begin{tipbox}[Quick Verification Checklist]
Before accepting any AI result:
\begin{itemize}
\item Test with simple special cases where you know the answer
\item Check dimensional analysis and limiting behavior
\item Verify non-trivial intermediate steps by hand
\item Compare with existing results in limiting cases
\item Run numerical experiments if analytical verification is difficult
\end{itemize}
\end{tipbox}

\subsection{Don'ts}

\begin{enumerate}
 \item {\bf Don't Trust Blindly}: Never accept AI mathematical outputs without verification.

\item {\bf Don't Outsource Understanding}: Use AI to augment your thinking, not replace it.

\item {\bf Don't Skip Steps}: Even if AI produces an answer, work through the derivation yourself. For steps that AI skipped, either carry them out yourself or ask them to provide the details and then check.

\item {\bf Don't Ignore Errors}: When you find errors in AI output, use them as learning opportunities about failure modes.

\item {\bf Don't Publish/Accept Without Verification}: Ensure every theorem, proof, and claim has been independently verified by qualified humans.
\end{enumerate}

\section{BROADER IMPLICATIONS}

My limited experience suggests several broader implications for AI in scientific research:

\textbf{1. AI as an Amplifier, Not a Replacement:} AI tools amplify human capability but cannot replace domain expertise and judgment.

\textbf{2. Verification is Essential:} The ease of generating content makes verification more critical, not less.

\textbf{3. New Skills Required:} Researchers need to develop skills in prompting and strategic directions, verification, and critical assessment of AI outputs.

\textbf{4. Transparency Matters:} Academic culture should encourage, not discourage, transparency about AI assistance.

\textbf{5. Education Must Adapt:} We must teach students both how to use AI tools and, critically, how to work without them productively and responsibly.

\section{ANSWER TO THE TITLE QUESTION}

Is human-centered AI-assisted creative scholarly work in mathematics productive collaboration or perilous shortcut?

My personal answer: \textbf{it depends entirely on how you use it.}

With proper safeguards\textemdash expert human verification, clear division of responsibilities, appropriate skepticism, and rigorous testing\textemdash AI assistance can meaningfully accelerate mathematical discovery and verification while enhancing understanding. We completed in weeks what might have taken months without AI assistance.

However, without these safeguards, AI assistance becomes a liability. The risk of subtle errors, the atrophy of mathematical thinking, and the false confidence in unverified results can lead researchers astray.

The technology is neither inherently good nor bad. The outcomes depend on human choices about implementation and oversight.

\section{SHOULD WE ENCOURAGE HUMAN-CENTERED AI-ASSISTED SCHOLARLY WORK?}

This is clearly debatable (and even controversial). Some of us may disapprove of the idea, while others will embrace it. Historically, scholarly work has been human--human collaboration, and that has been the default model.

We have long accepted citing mathematical software and computer algebra systems like \texttt{Mathematica}, \texttt{MATLAB}, 
or \texttt{Maple} when they perform symbolic manipulations. These tools execute 
algorithms with precise syntax and problem encoding programmed by humans, yet we don't credit them as coauthors. 
Why? Because they are traditionally deterministic tools executing known procedures.

LLMs are fundamentally different. 
They offer natural language
interaction and can suggest solution strategies, but could make computational errors. 
They generate novel strategies, explore 
unexpected proof directions, and sometimes produce insights that surprise 
even expert users. The distinction between "tool" and "collaborator" becomes 
murky in this case. If we credit a human research assistant under supervision, why not credit an AI assistant performing 
the same function, as long as new knowledge or findings have been developed? In other words, as far as advancing knowledge and the state-of-the-art is concerned, should it matter if a scholarly work is a human-human or human-AI collaboration?

However, I acknowledge the counterarguments:
\begin{itemize}
 \item AI lacks intent, understanding, and responsibility.
\item AI cannot defend its contributions or verify correctness.
\item Coauthorship implies accountability that AI cannot bear.
\item There are practical issues with copyright and legal responsibility
\end{itemize}

Perhaps the solution is a new category: "AI Research Assistant" in 
acknowledgments, with detailed specification of contributions. This 
preserves transparency without granting full coauthorship status.

\section{LIMITATIONS OF THIS STUDY}
This case study has several limitations that readers should consider:
\begin{enumerate}
 \item {\bf Single Domain}: Our experience is limited to mathematical proof development. 
 Results may not generalize to experimental science, data analysis, or 
 other forms of scholarly work.

\item {\bf Expert User}: As a professor with extensive mathematical training and experience, I could 
 quickly identify AI errors. Less experienced researchers might struggle 
 with verification.

\item {\bf Specific Problem Type}: The algebraic nature of our problem played to AI 
 strengths. More conceptual or geometric problems might yield different 
 results.

\item {\bf Time Snapshot}: AI capabilities are rapidly evolving. This study reflects 
 systems available in late 2025.

\item {\bf Selection Bias}: We document a successful collaboration. Failed attempts 
 at AI-assisted research may have different patterns we didn't observe.
\end{enumerate}

While broader methodological studies across multiple problems would enhance generalizability, we believe detailed transparency about one case offers practical insights appropriate for illuminating controversies.

\section{COMPARISON WITH SPECIFIED TOOLS}

\textbf{LLMs vs. Search Engines for Literature:} LLMs can synthesize patterns across papers and identify conceptual connections that keyword search might miss. However, they sometimes hallucinate citations or misattribute results. My recommendation is to use LLMs for initial literature mapping, then verify all citations manually using traditional search and human verifications.

\noindent \textbf{LLMs vs. Computer Algebra Systems:} CAS (like \texttt{Mathematica} and  \texttt{Maple}) excel at deterministic symbolic computation but require precise syntax and problem encoding. LLMs offer natural language interaction and can suggest solution strategies, but could make computational errors. The ideal workflow combines both: use LLMs for strategy exploration and problem setup, then CAS for verifications when applicable.

\noindent \textbf{General LLMs vs. \texttt{LLEMMA} and \texttt{Minerva}:} This requires an in-depth, comprehensive, and systematic comparison, which is beyond the scope of this paper. However, the general LLMs (such as \texttt{GPT-4}, \texttt{Claude}) proved surprisingly capable when properly prompted for our mathematical derivation tasks. This suggests that for many applications, accessibility and ease of use may matter more than mathematical specialization.

\section{CONCLUSION}

We have provided empirical evidence that human-AI collaboration in mathematics can produce rigorous, novel results when properly structured. Our work on Hermite quadrature demonstrates both the potential and the limitations of current AI systems.

The key insight is not that AI can do mathematics\textemdash it cannot, at least not reliably\textemdash but that it can assist mathematicians in specific, well-defined ways under appropriate supervision. The human must remain firmly in control, providing intuition, strategic directions, verification, and judgment.

As AI capabilities continue to advance, the research community must develop best practices for responsible use. We hope this detailed account of our experience contributes to that ongoing conversation by providing concrete evidence of what works, what doesn't, and what safeguards are essential.

The future of mathematics may well involve human-AI collaboration. Our results suggest this future can be productive, if we proceed thoughtfully, responsibly, and maintain appropriate skepticism.

\section*{ACKNOWLEDGMENTS}

 The author would like to thank his students, Giancarlo Villatoro and C.G. Krishnanunni for their initial work without AI assistance in \cite{VillatoroKrishnanunniBuiThanh}. The author also would like to acknowledge the contributions of Claude (Anthropic) and ChatGPT (OpenAI) as AI research assistants in this work. All final collaborative mathematical results were verified by the author. We thank the Associate Editor and all reviewers for their constructive feedback, which has significantly improved the manuscript. This work is partially funded by the National Science Foundation award NSF-OAC-2212442, and by the Department of Energy award DE-SC0024633.

 \section{Examples of prompts and outputs}

\bibliographystyle{unsrt}
\bibliography{references}

\end{document}